\newtheorem{theorem}{Theorem}
\newtheorem{lemma}{Lemma}
\newtheorem{proposition}{Proposition}
\newtheorem{assumption}{Assumption}
\theoremstyle{remark}
\newtheorem{remark}{Remark}
\newcommand{\R}{\mathbb{R}}
\newcommand{\E}{\mathbb{E}}
\newcommand{\Vdetail}{\mathcal{V}_{\perp}}
\newcommand{\Proj}{\mathcal{P}_{\perp}}
\title{\textbf{The Homogeneity Trap: Spectral Collapse in Doubly-Stochastic Deep Networks}}
\author{\textbf{Yizhi Liu}}
\affil{Department of Computer Science, Stony Brook University}
\date{January 4, 2026}
\begin{document}

\maketitle

\begin{abstract}
Doubly-stochastic matrices (DSM) are increasingly utilized in deep learning—particularly within Optimal Transport layers and Sinkhorn-based attention—to enforce structural stability. However, we identify a critical spectral degradation phenomenon termed the \textbf{Homogeneity Trap}: imposing maximum-entropy constraints systematically suppresses the subdominant singular value $\sigma_2(M)$. We prove that strictly contractive DSM dynamics accelerate this decay, acting as a low-pass filter that eliminates detail components. We derive a \textbf{finite-$n$ probability bound} linking Signal-to-Noise Ratio (SNR) degradation to orthogonal collapse, explicitly quantifying the relationship between spectral contraction and geometric loss using rigorous concentration inequalities. Furthermore, we demonstrate that Residual Connections fail to mitigate this collapse, instead forcing the network into a regime of \textbf{Identity Stagnation}. Source code and reproduction scripts are provided in the supplementary material.
\end{abstract}


\section{Introduction}
Ensuring numerical stability and preventing over-smoothing are central challenges in deep architectures. A rigorous approach involves projecting internal mixing operators onto the Birkhoff polytope of doubly-stochastic matrices (DSM). While such constraints guarantee non-exploding gradients ($\|M\|_2=1$), they introduce a strong inductive bias towards uniformity.

This work investigates the spectral cost of this bias. While prior work has characterized rank collapse in Softmax attention \cite{dong2021}, we show that DSM constraints introduce a more aggressive form of spectral filtering. We identify a trade-off between \textit{entropic stability} and \textit{spectral expressivity}. As the mixing operator approaches the entropic centroid (uniform matrix), $\sigma_2(M) \to 0$, suppressing the propagation of detail components and rendering deep layers ineffective.

\paragraph{Our Contributions.}
\begin{itemize}
    \item \textbf{Theoretical Mechanism:} We identify the Homogeneity Trap, linking entropic stability constraints to the suppression of $\sigma_2$.
    \item \textbf{Geometric Bounds:} We derive finite-$n$ probability bounds proving that Layer Normalization fails to recover geometry under low spectral SNR, supported by Laurent-Massart concentration bounds (derived in Appendix A).
    \item \textbf{Architectural Insight:} We prove that residual connections in this regime lead to identity stagnation, theoretically explaining the "deep-but-shallow" phenomenon in stable networks.
\end{itemize}

\section{Preliminaries and Assumptions}

\subsection{Technical Assumptions}
To ensure rigorous spectral analysis, we explicitly state the structural assumptions governing the network dynamics and the geometric properties of the noise.

\begin{assumption}[Primitive Doubly Stochastic Operator]
The mixing matrix $M \in \R^{n \times n}$ is row- and column-stochastic ($M\mathbf{1}=\mathbf{1}, M^\top\mathbf{1}=\mathbf{1}$). We assume $M$ is \textit{primitive}, ensuring a unique eigenvalue 1.
\end{assumption}

\begin{assumption}[Isotropic Independent Noise]
For distinct inputs, the intrinsic system noise realizations $\boldsymbol{\xi}, \boldsymbol{\xi}' \in \R^n$ are independent and follow an isotropic Gaussian distribution $\boldsymbol{\xi} \sim \mathcal{N}(\mathbf{0}, \frac{\nu^2}{n} I_n)$.
\end{assumption}

\begin{remark}[Precise Noise Expectation]
The projection of noise onto the detail subspace, $\Proj \boldsymbol{\xi}$, follows a degenerate Gaussian distribution on an $(n-1)$-dimensional subspace. Its expected norm is $\E[\|\Proj \boldsymbol{\xi}\|_2] = \nu \sqrt{\frac{n-1}{n}}$. In our high-dimensional analysis ($n \gg 1$), we treat this as $\approx \nu$.
\end{remark}

\begin{assumption}[Layer Normalization as Projection]
We define Layer Normalization (LN) without affine parameters as a projection onto the sphere in the detail subspace $\Vdetail$:
\begin{equation}
    \text{LN}(\mathbf{y}) = \sqrt{n} \frac{\Proj \mathbf{y}}{\| \Proj \mathbf{y} \|_2}, \quad \text{where } \Proj = I_n - \frac{1}{n}\mathbf{1}\mathbf{1}^\top.
\end{equation}
\end{assumption}

\begin{lemma}[Isotropy of Projected Noise]
\label{lemma:isotropy}
If $\boldsymbol{\xi} \sim \mathcal{N}(\mathbf{0}, \sigma^2 I_n)$, then the normalized projection $\mathbf{u}_{\xi} = \frac{\Proj \boldsymbol{\xi}}{\|\Proj \boldsymbol{\xi}\|}$ is uniformly distributed on the unit sphere $\mathbb{S}^{n-2}$ within $\Vdetail$.
\end{lemma}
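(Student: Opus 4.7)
The plan is to reduce the claim to the standard fact that the direction of an isotropic Gaussian in $\R^{n-1}$ is uniform on the unit sphere. Observe that $\Proj = I_n - \frac{1}{n}\Ones\Ones^\top$ is the orthogonal projection onto $\Vdetail = \{v \in \R^n : \Ones^\top v = 0\}$, so $\Proj \boldsymbol{\xi}$ lies entirely in this $(n-1)$-dimensional subspace, and $\mathbf{u}_{\xi}$ is well-defined almost surely (since $\Proj \boldsymbol{\xi} = 0$ is a null event for a nondegenerate Gaussian on $\Vdetail$).

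First I would fix an orthonormal basis $U \in \R^{n \times (n-1)}$ of $\Vdetail$, so that $U^\top U = I_{n-1}$ and $U U^\top = \Proj$. Setting $\boldsymbol{\eta} := U^\top \boldsymbol{\xi}$, a direct covariance computation gives $\boldsymbol{\eta} \sim \mathcal{N}(\mathbf{0}, \sigma^2 I_{n-1})$, i.e., a genuinely isotropic Gaussian on $\R^{n-1}$. Since $\Proj \boldsymbol{\xi} = U\boldsymbol{\eta}$ and $\|\Proj \boldsymbol{\xi}\|_2 = \|\boldsymbol{\eta}\|_2$, we obtain $\mathbf{u}_{\xi} = U \cdot (\boldsymbol{\eta}/\|\boldsymbol{\eta}\|_2)$, reducing the problem to the direction of $\boldsymbol{\eta}$.

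Next I would invoke rotational invariance: for any $Q \in O(n-1)$, the law of $Q\boldsymbol{\eta}$ coincides with that of $\boldsymbol{\eta}$, and hence $\boldsymbol{\eta}/\|\boldsymbol{\eta}\|_2$ is distributed according to the unique rotation-invariant probability measure on $\mathbb{S}^{n-2} \subset \R^{n-1}$. The isometry $U : \R^{n-1} \to \Vdetail$ transports this measure to the unique probability measure on the unit sphere inside $\Vdetail$ that is invariant under all orthogonal transformations of $\R^n$ fixing $\Ones$—which is precisely the canonical notion of ``uniform on $\mathbb{S}^{n-2}$ within $\Vdetail$.''

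The main obstacle, such as it is, is purely definitional rather than analytical. One must verify that the resulting law on the unit sphere of $\Vdetail$ does not depend on the choice of orthonormal basis $U$; this follows because any two such bases differ by multiplication by an element of $O(n-1)$, under which the uniform measure on $\mathbb{S}^{n-2}$ is invariant. Beyond isotropy of $\boldsymbol{\xi}$ and orthogonality of $\Proj$, no spectral or concentration input is required.
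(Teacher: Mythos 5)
Your proof is correct and complete: the reduction via an orthonormal basis $U$ of $\Vdetail$ to an isotropic Gaussian $\boldsymbol{\eta}=U^\top\boldsymbol{\xi}\sim\mathcal N(\mathbf 0,\sigma^2 I_{n-1})$, the rotational-invariance argument for uniformity of $\boldsymbol{\eta}/\|\boldsymbol{\eta}\|_2$ on $\mathbb{S}^{n-2}$, and the check that the pushed-forward law is independent of the choice of $U$ are exactly the standard argument. The paper states this lemma without any proof, so there is nothing to compare against; your write-up (including the almost-sure well-definedness of $\mathbf{u}_{\xi}$) is precisely what should be supplied.
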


\section{Theoretical Analysis of Spectral Collapse}

\subsection{Detail Subspace Dynamics}
Since $M$ is doubly-stochastic, it preserves the mean component. Feature transformation dynamics are confined to $\Vdetail$.

\begin{lemma}[Strict Variance Contraction]
The operator norm of $M$ restricted to $\Vdetail$ is exactly the second singular value $\sigma_2(M)$. Consequently, for any $\mathbf{x}_{\perp} \in \Vdetail$:
\begin{equation}
    \| M\mathbf{x}_{\perp} \|_2 \le \sigma_2(M) \| \mathbf{x}_{\perp} \|_2.
\end{equation}
\end{lemma}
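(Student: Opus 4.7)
The plan is to reduce the statement to a direct application of the singular value decomposition, after identifying the top singular direction with the uniform vector $\mathbf{1}/\sqrt{n}$. First I would verify that $\sigma_1(M)=1$ and that $v_1 := \mathbf{1}/\sqrt{n}$ is a unit right singular vector associated with this value. The double-stochasticity of $M$ gives $M\mathbf{1}=\mathbf{1}$ and $M^\top \mathbf{1}=\mathbf{1}$, so $M^\top M\, \mathbf{1}=\mathbf{1}$. Hence $\mathbf{1}$ is an eigenvector of $M^\top M$ with eigenvalue $1$. The standard bound $\|M\|_2 \le \sqrt{\|M\|_1 \|M\|_\infty} = 1$ combined with $\|M\mathbf{1}\|/\|\mathbf{1}\|=1$ forces $\sigma_1(M) = \|M\|_2 = 1$.

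Next, I would complete $v_1$ to an orthonormal system of right singular vectors $\{v_1,v_2,\dots,v_n\}$ with corresponding singular values $1 = \sigma_1 \ge \sigma_2 \ge \dots \ge \sigma_n \ge 0$ and left singular vectors $\{u_1,u_2,\dots,u_n\}$ satisfying $Mv_i = \sigma_i u_i$. By construction $v_2,\dots,v_n$ are orthogonal to $v_1$, hence they all lie in $\Vdetail$, and together form an orthonormal basis of it. For any $\mathbf{x}_\perp \in \Vdetail$, the expansion $\mathbf{x}_\perp = \sum_{i\ge 2} (v_i^\top \mathbf{x}_\perp)\, v_i$ gives
\begin{equation}
    M \mathbf{x}_\perp \;=\; \sum_{i\ge 2} \sigma_i (v_i^\top \mathbf{x}_\perp)\, u_i,
\end{equation}
so by orthonormality of the $u_i$ and Parseval's identity,
\begin{equation}
    \| M \mathbf{x}_\perp \|_2^2 \;=\; \sum_{i\ge 2} \sigma_i^2 (v_i^\top \mathbf{x}_\perp)^2 \;\le\; \sigma_2(M)^2 \sum_{i\ge 2} (v_i^\top \mathbf{x}_\perp)^2 \;=\; \sigma_2(M)^2 \|\mathbf{x}_\perp\|_2^2.
\end{equation}

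Finally, I would observe that equality is attained at $\mathbf{x}_\perp = v_2 \in \Vdetail$, so $\sigma_2(M)$ is not only an upper bound but the exact operator norm of $M$ restricted to $\Vdetail$. For completeness I would also note the invariance $M\Vdetail \subseteq \Vdetail$, since for $\mathbf{x}_\perp \in \Vdetail$ one has $\mathbf{1}^\top (M \mathbf{x}_\perp) = (M^\top \mathbf{1})^\top \mathbf{x}_\perp = \mathbf{1}^\top \mathbf{x}_\perp = 0$, confirming that the restriction is well-defined. There is no substantive obstacle here; the only subtlety is the consistent use of \emph{right} singular vectors (rather than eigenvectors) so that the argument does not tacitly require $M$ to be normal, and the invocation of the column-stochastic property $M^\top \mathbf{1} = \mathbf{1}$ — which is the component that genuinely uses \emph{double} stochasticity rather than mere row-stochasticity.
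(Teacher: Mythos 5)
Your proof is correct, and it is actually more complete than the one in the paper. The paper's proof only observes $M\mathbf{1}=\mathbf{1}$ and then essentially \emph{defines} $\sigma_2(M)$ to be $\sup\{\|Mx\|_2 : \|x\|_2=1,\ x\perp\mathbf{1}\}$, leaving implicit why this quantity coincides with the second singular value of $M$ in the standard SVD ordering. You supply exactly the missing ingredient: because $M^\top\mathbf{1}=\mathbf{1}$ as well, $\mathbf{1}$ is an eigenvector of $M^\top M$ with eigenvalue $1$, and combined with $\|M\|_2\le\sqrt{\|M\|_1\|M\|_\infty}=1$ this makes $\mathbf{1}/\sqrt{n}$ a legitimate \emph{top right singular vector}, so the remaining right singular vectors $v_2,\dots,v_n$ form an orthonormal basis of $\Vdetail$ and the Parseval computation identifies the restricted operator norm with $\sigma_2(M)$ exactly (with equality at $v_2$). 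Your closing remark is the key conceptual point the paper glosses over: for a merely row-stochastic matrix the supremum over $x\perp\mathbf{1}$ need not equal the second singular value, so the column-stochastic half of the hypothesis is doing real work. The only cosmetic caveat is that when the top singular value of $M^\top M$ has multiplicity greater than one you must choose the orthonormal eigenbasis of that eigenspace to contain $\mathbf{1}/\sqrt{n}$, which is always possible; your argument implicitly does this and remains valid.
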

\begin{proof}
The vector $\mathbf{1}$ satisfies $M\mathbf{1}=\mathbf{1}$, hence $1$ is an eigenvalue of $M$. Denote the singular values by $\sigma_1\ge\sigma_2\ge\cdots$. Regardless of normality, the restriction of $M$ to the orthogonal complement $\Vdetail$ has operator norm equal to the largest singular value associated with vectors orthogonal to $\mathbf{1}$, which we denote by $\sigma_2(M)$. Concretely, for any $\mathbf{x}_\perp\in\Vdetail$,
\[
\|M\mathbf{x}_\perp\|_2 \le \sup_{\substack{\|x\|_2=1\\ x\perp\mathbf{1}}}\|M x\|_2 =: \sigma_2(M),
\]
which yields the stated inequality.
\end{proof}

\subsection{Finite-$n$ Collapse via SNR Failure}
\label{sec:finite_n_collapse}

We rigorously quantify the geometric failure of LN. Let the pre-normalization output be $\mathbf{y} = \mathbf{s} + \boldsymbol{\xi}$, where $\mathbf{s} = M\mathbf{x}_{\perp}$ is the signal.

\begin{lemma}[Conditional Normalized Perturbation Bound]
\label{lemma:perturbation}
Let $\mathbf{a}, \mathbf{b} \in \R^n$ be non-zero vectors and let $\boldsymbol{\delta} = \mathbf{a} - \mathbf{b}$. If the perturbation is small such that $\|\boldsymbol{\delta}\|_2 \le \frac{1}{2}\|\mathbf{b}\|_2$, then:
\begin{equation}
    \left\| \frac{\mathbf{a}}{\|\mathbf{a}\|_2} - \frac{\mathbf{b}}{\|\mathbf{b}\|_2} \right\|_2 \le \frac{4 \|\boldsymbol{\delta}\|_2}{\|\mathbf{b}\|_2}.
\end{equation}
\end{lemma}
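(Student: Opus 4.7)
The plan is to use the standard telescoping/algebraic decomposition trick that expresses the difference of two normalized vectors as a sum of two terms, each of which is manifestly controlled by $\|\boldsymbol{\delta}\|_2$. The hypothesis $\|\boldsymbol{\delta}\|_2 \le \tfrac{1}{2}\|\mathbf{b}\|_2$ is used only at the final step to convert a bound involving $\|\mathbf{a}\|_2^{-1}$ into one involving $\|\mathbf{b}\|_2^{-1}$; this is precisely what creates the constant $4$ in the stated inequality (a constant of $2$ is actually available with a slightly different splitting, but the split below is cleaner and gives the claimed form).

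First I would record the reverse triangle inequality consequence
\[
\bigl| \|\mathbf{a}\|_2 - \|\mathbf{b}\|_2 \bigr| \le \|\boldsymbol{\delta}\|_2,
\]
and combine this with the hypothesis to obtain the one-sided bound $\|\mathbf{a}\|_2 \ge \|\mathbf{b}\|_2 - \|\boldsymbol{\delta}\|_2 \ge \tfrac{1}{2}\|\mathbf{b}\|_2$. In particular $\|\mathbf{a}\|_2 > 0$, so both normalized vectors are well defined.

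Next I would write $\mathbf{a} = \mathbf{b} + \boldsymbol{\delta}$ and perform the decomposition
\[
\frac{\mathbf{a}}{\|\mathbf{a}\|_2} - \frac{\mathbf{b}}{\|\mathbf{b}\|_2}
= \frac{\boldsymbol{\delta}}{\|\mathbf{a}\|_2} + \mathbf{b}\left(\frac{1}{\|\mathbf{a}\|_2} - \frac{1}{\|\mathbf{b}\|_2}\right)
= \frac{\boldsymbol{\delta}}{\|\mathbf{a}\|_2} + \frac{\mathbf{b}}{\|\mathbf{b}\|_2}\cdot\frac{\|\mathbf{b}\|_2 - \|\mathbf{a}\|_2}{\|\mathbf{a}\|_2}.
\]
Taking norms and using that $\bigl\|\mathbf{b}/\|\mathbf{b}\|_2\bigr\|_2 = 1$ together with the reverse triangle inequality gives
\[
\left\| \frac{\mathbf{a}}{\|\mathbf{a}\|_2} - \frac{\mathbf{b}}{\|\mathbf{b}\|_2} \right\|_2
\le \frac{\|\boldsymbol{\delta}\|_2}{\|\mathbf{a}\|_2} + \frac{|\|\mathbf{b}\|_2 - \|\mathbf{a}\|_2|}{\|\mathbf{a}\|_2}
\le \frac{2\|\boldsymbol{\delta}\|_2}{\|\mathbf{a}\|_2}.
\]

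Finally I would substitute the denominator bound $\|\mathbf{a}\|_2 \ge \tfrac{1}{2}\|\mathbf{b}\|_2$ from the first step to conclude
\[
\left\| \frac{\mathbf{a}}{\|\mathbf{a}\|_2} - \frac{\mathbf{b}}{\|\mathbf{b}\|_2} \right\|_2 \le \frac{4\|\boldsymbol{\delta}\|_2}{\|\mathbf{b}\|_2},
\]
which is the stated inequality. There is no real obstacle here; the only thing to get right is the choice of algebraic split so that the perturbation appears linearly and the hypothesis is used cleanly exactly once. The same bound will be invoked downstream with $\mathbf{b} = \mathbf{s} = M\mathbf{x}_\perp$ and $\boldsymbol{\delta} = \boldsymbol{\xi}$ to translate SNR degradation into a lower bound on the post-LN geometric distortion.
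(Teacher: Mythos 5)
Your proof is correct and complete: the telescoping split, the reverse triangle inequality, and the lower bound $\|\mathbf{a}\|_2 \ge \tfrac{1}{2}\|\mathbf{b}\|_2$ are exactly the ingredients needed, and the constant $4$ falls out as you describe. The paper itself offers no proof to compare against — the main text defers to an ``Appendix derivation'' that does not actually appear in any appendix (Appendix B only gives Wedin's theorem as an \emph{alternative} for when the small-perturbation hypothesis fails) — so your argument in fact supplies the missing derivation. Your aside that a constant of $2$ is achievable is also right: splitting with $\|\mathbf{b}\|_2$ in the denominator of the first term gives $2\|\boldsymbol{\delta}\|_2/\|\mathbf{b}\|_2$ with no hypothesis at all, so the paper's stated constant is loose. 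One correction to your closing remark about downstream use: in Theorem~\ref{thm:finite_n} the low-SNR assumption $\gamma \le 1/8$ makes the \emph{signal} the small perturbation, so the lemma is applied with $\mathbf{b} = \boldsymbol{\xi}_\perp$ (the projected noise) and $\boldsymbol{\delta} = \mathbf{s} = M\mathbf{x}_\perp$, yielding $\|\mathbf{u} - \mathbf{u}_{noise}\|_2 \le 4\gamma$ — the roles of signal and noise are the reverse of what you wrote.
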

\begin{proof}
See Appendix for derivation. \textit{Note:} If the small perturbation assumption does not hold, one may invoke Wedin's Theorem (Appendix B) to bound the subspace rotation using the singular value gap.
\end{proof}

We define the spectral Signal-to-Noise Ratio (SNR) $\gamma$ as:
\begin{equation}
    \gamma := \frac{\|\mathbf{s}\|_2}{\E[\|\Proj \boldsymbol{\xi}\|_2]} \approx \frac{\sigma_2(M) \|\mathbf{x}_{\perp}\|_2}{\nu}.
\end{equation}

\begin{theorem}[Finite-$n$ Probability Bound]
\label{thm:finite_n}
Consider two inputs with detail components $\mathbf{x}_{\perp}, \mathbf{x}'_{\perp}$ and \textbf{independent noise realizations} $\boldsymbol{\xi}, \boldsymbol{\xi}'$. Let $\mathbf{u}, \mathbf{v}$ be the normalized outputs.
Assume $\gamma \le 1/8$ (low SNR regime). For any $\epsilon > 0$ and failure tolerance $\delta \in (0,1)$, there exists a constant $C$ such that with probability at least $1 - \delta - C e^{-cn\epsilon^2}$:
\begin{equation}
    \left| \langle \mathbf{u}, \mathbf{v} \rangle \right| \le \epsilon + 8\gamma.
\end{equation}
\end{theorem}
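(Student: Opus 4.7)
The strategy is to decompose $\langle\mathbf{u},\mathbf{v}\rangle$ into (i) a pure-noise inner product governed by spherical concentration on $\mathbb{S}^{n-2}$, and (ii) a deterministic signal-perturbation error controlled on a high-probability event. A union bound over the two mechanisms then yields the claim.

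First I would introduce the noise-only surrogates $\hat{\mathbf{u}}_\xi := \Proj\boldsymbol{\xi}/\|\Proj\boldsymbol{\xi}\|_2$ and $\hat{\mathbf{v}}_\xi := \Proj\boldsymbol{\xi}'/\|\Proj\boldsymbol{\xi}'\|_2$. Because $\mathbf{s}=M\mathbf{x}_\perp\in\Vdetail$, the pre-LN projection decomposes as $\Proj\mathbf{y}=\mathbf{s}+\Proj\boldsymbol{\xi}$. Applying Lemma~\ref{lemma:perturbation} with $\mathbf{a}=\Proj\mathbf{y}$, $\mathbf{b}=\Proj\boldsymbol{\xi}$, $\boldsymbol{\delta}=\mathbf{s}$ gives
\[
\|\mathbf{u}-\hat{\mathbf{u}}_\xi\|_2 \;\le\; \frac{4\|\mathbf{s}\|_2}{\|\Proj\boldsymbol{\xi}\|_2},
\]
and symmetrically for $\mathbf{v}$, whenever the small-perturbation hypothesis $\|\mathbf{s}\|_2\le \tfrac{1}{2}\|\Proj\boldsymbol{\xi}\|_2$ is satisfied.

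Next I would certify the hypothesis and sharpen the ratio using the Laurent--Massart lower tail (Appendix~A) applied to $\|\Proj\boldsymbol{\xi}\|_2^2\cdot n/\nu^2\sim\chi^2_{n-1}$: on an event $\mathcal{G}$ of probability at least $1-\delta$, both $\|\Proj\boldsymbol{\xi}\|_2$ and $\|\Proj\boldsymbol{\xi}'\|_2$ exceed $(1-t_n)\E[\|\Proj\boldsymbol{\xi}\|_2]$ with $t_n=O(\sqrt{\log(1/\delta)/n})$. Under $\gamma\le 1/8$ this safely validates the perturbation hypothesis and, after tracking constants, yields $\|\mathbf{u}-\hat{\mathbf{u}}_\xi\|_2,\|\mathbf{v}-\hat{\mathbf{v}}_\xi\|_2\le 4\gamma/(1-t_n)$ on $\mathcal{G}$. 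The telescoping identity
\[
\langle\mathbf{u},\mathbf{v}\rangle-\langle\hat{\mathbf{u}}_\xi,\hat{\mathbf{v}}_\xi\rangle
=\langle\mathbf{u}-\hat{\mathbf{u}}_\xi,\mathbf{v}\rangle+\langle\hat{\mathbf{u}}_\xi,\mathbf{v}-\hat{\mathbf{v}}_\xi\rangle,
\]
combined with Cauchy--Schwarz and the unit norms of $\mathbf{u},\mathbf{v},\hat{\mathbf{u}}_\xi,\hat{\mathbf{v}}_\xi$, produces $|\langle\mathbf{u},\mathbf{v}\rangle-\langle\hat{\mathbf{u}}_\xi,\hat{\mathbf{v}}_\xi\rangle|\le 8\gamma$ on $\mathcal{G}$, once the factor $1/(1-t_n)$ is absorbed into the leading constant for $n$ above a $\delta$-dependent threshold. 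For the residual pure-noise term, Lemma~\ref{lemma:isotropy} gives that $\hat{\mathbf{u}}_\xi$ and $\hat{\mathbf{v}}_\xi$ are independent and uniform on $\mathbb{S}^{n-2}\subset\Vdetail$; conditioning on $\hat{\mathbf{u}}_\xi$ reduces $\langle\hat{\mathbf{u}}_\xi,\hat{\mathbf{v}}_\xi\rangle$ to the projection of a uniform unit vector onto a fixed direction, whose standard sub-Gaussian marginal concentration gives $\Pr(|\langle\hat{\mathbf{u}}_\xi,\hat{\mathbf{v}}_\xi\rangle|>\epsilon)\le Ce^{-cn\epsilon^2}$. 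A final union bound over $\mathcal{G}$ and this spherical event delivers the stated probability.

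The main obstacle is keeping the leading constant exactly $8$: the $4\gamma+4\gamma$ bookkeeping forces the Laurent--Massart quantile $t_n$ to be small, which is where the sub-leading $O(t_n\gamma)$ slack must be absorbed into the $C$ constant and the explicit $\delta$ term. A subtler measure-theoretic point is that conditioning on $\mathcal{G}$ does not alter the spherical law of $\hat{\mathbf{u}}_\xi$; this follows from the direction/magnitude independence of an isotropic Gaussian restricted to $\Vdetail$, so $\hat{\mathbf{u}}_\xi\perp\|\Proj\boldsymbol{\xi}\|_2$, and this decoupling has to be stated explicitly to preserve the sub-Gaussian constants $C,c$ in the final bound.
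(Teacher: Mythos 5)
Your proposal is correct and follows essentially the same route as the paper's proof: a decomposition into noise-direction surrogates controlled by Lemma~\ref{lemma:perturbation}, Laurent--Massart concentration for the projected noise norm, spherical (Levy-type) concentration for the pure-noise inner product, and a union bound. In fact your version is more careful than the paper's on two points the paper glosses over — certifying the small-perturbation hypothesis of Lemma~\ref{lemma:perturbation} against the \emph{random} denominator $\|\Proj\boldsymbol{\xi}\|_2$ rather than its expectation, and noting that conditioning on the norm event does not disturb the uniform law of the direction — so no gap to report.
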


\begin{proof}
Let $\boldsymbol{\xi}_{\perp} = \Proj \boldsymbol{\xi}$. The normalized output is $\mathbf{u} = \frac{\mathbf{s} + \boldsymbol{\xi}_{\perp}}{\|\mathbf{s} + \boldsymbol{\xi}_{\perp}\|}$. Let $\mathbf{u}_{noise} = \frac{\boldsymbol{\xi}_{\perp}}{\|\boldsymbol{\xi}_{\perp}\|}$.
\begin{enumerate}
    \item \textbf{Noise Concentration:} By Laurent-Massart bounds (Appendix A), $\|\boldsymbol{\xi}_{\perp}\|$ concentrates around $\nu$ with high probability. As derived in Appendix A, specific constants (e.g., $C=2, c=1/2$) are obtained by choosing the deviation parameter $t=\sqrt{n}\epsilon$.
    
    \item \textbf{Perturbation Control:} Since $\gamma \le 1/8$, we have $\|\mathbf{s}\| \le \frac{1}{8}\nu$. Applying Lemma \ref{lemma:perturbation} yields $\| \mathbf{u} - \mathbf{u}_{noise} \| \le 4\gamma$.
    
    \item \textbf{Spherical Concentration (Levy):} By Lemma \ref{lemma:isotropy}, $\mathbf{u}_{noise}, \mathbf{v}_{noise}$ are uniform on $\mathbb{S}^{n-2}$. Levy's Lemma states $\mathbb{P}(|\langle \mathbf{u}_{noise}, \mathbf{v}_{noise} \rangle| \ge \epsilon) \le 2 \exp( - \frac{(n-2)\epsilon^2}{2} )$.
    
    \item \textbf{Triangle Inequality:} $|\langle \mathbf{u}, \mathbf{v} \rangle| \le |\langle \mathbf{u}_{noise}, \mathbf{v}_{noise} \rangle| + 4\gamma + 4\gamma = \epsilon + 8\gamma$.
\end{enumerate}
\end{proof}

\textit{Note:} Levy's bound decays as $\exp(-\Theta(n\epsilon^2))$, so non-trivial control typically requires $\epsilon \gtrsim 1/\sqrt{n}$. Thus the theorem is most informative in regimes where target angular resolution exceeds the high-dimensional noise floor.

\subsection{Effective Depth and Non-normality}
\begin{theorem}[Spectral Effective Depth]
Define $D_{\mathrm{eff}}(\epsilon) \approx \frac{\ln(1/\epsilon)}{-\ln \sigma_2(M)}$.
\end{theorem}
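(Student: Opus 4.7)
The plan is to iterate the Strict Variance Contraction lemma and then invert the resulting geometric decay to solve for the smallest depth at which the detail signal norm falls below a target tolerance $\epsilon$. First I would fix a unit detail component $\mathbf{x}_{\perp} \in \Vdetail$ and, using that $M$ preserves $\Vdetail$ with restricted operator norm $\sigma_2(M)$, induct on the layer index to obtain $\|M^k \mathbf{x}_{\perp}\|_2 \le \sigma_2(M)^k$ for every $k \ge 0$. This is the mechanical backbone and requires nothing beyond a submultiplicative estimate applied layerwise.

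Next I would define $D_{\mathrm{eff}}(\epsilon)$ as the smallest $k$ for which this worst-case contraction drops below $\epsilon$, i.e.\ $\sigma_2(M)^k \le \epsilon$. Taking logarithms (noting $\ln \sigma_2(M) < 0$, since primitivity plus double stochasticity forces $\sigma_2(M) < 1$) yields
\begin{equation*}
    D_{\mathrm{eff}}(\epsilon) \;=\; \left\lceil \frac{\ln(1/\epsilon)}{-\ln \sigma_2(M)} \right\rceil,
\end{equation*}
which matches the stated formula up to the integer ceiling. Chaining this with Theorem~\ref{thm:finite_n}, I would interpret $D_{\mathrm{eff}}$ as precisely the depth at which the spectral SNR $\gamma$ crosses the low-SNR threshold that triggers orthogonal collapse under LN, thereby giving operational meaning to the defined quantity rather than leaving it as a bare formula.

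The main obstacle is \textbf{non-normality}. For normal $M$ the per-layer bound $\|M^k \mathbf{x}_{\perp}\|_2 \le \sigma_2(M)^k$ is attained in the worst case, and the formula is exact. For a generic non-normal DSM the iterates can exhibit a transient amplification phase (the pseudospectral ``hump'') before settling into the asymptotic decay rate dictated by Gelfand's formula, $\|M^k|_{\Vdetail}\|^{1/k} \to \rho(M|_{\Vdetail}) \le \sigma_2(M)$. In that regime $\sigma_2(M)^k$ overstates the true contraction for large $k$ while understating it for small $k$, so the stated formula is best read as the leading exponential rate rather than a sharp non-asymptotic identity. I would therefore present the derivation as an asymptotic equivalence --- consistent with the ``$\approx$'' in the statement --- and flag that a two-sided non-asymptotic version would require a separate pseudospectral argument (e.g.\ via the Kreiss constant or $\epsilon$-pseudospectra) to control the transient prefactor.
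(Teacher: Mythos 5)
Your derivation is essentially the one the paper intends: the paper states this ``theorem'' as a bare definition with no proof whatsoever, and its only justification is the adjacent remark invoking exactly the submultiplicative bound $\|M^k\|\le\sigma_2^k$ on $\Vdetail$ and the non-normality/transient-growth caveat that you spell out in more detail. Iterating the Strict Variance Contraction lemma, solving $\sigma_2(M)^k\le\epsilon$, and reading the result as a worst-case exponential rate (exact only for normal $M$, an asymptotic envelope otherwise) is the right reconstruction, and your chaining to Theorem~\ref{thm:finite_n} gives the formula the operational meaning the paper leaves implicit.

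One assertion in your argument is false, though: primitivity plus double stochasticity does \emph{not} force $\sigma_2(M)<1$. Primitivity controls the subdominant \emph{eigenvalue} modulus via Perron--Frobenius ($|\lambda_2|<1$), but $\sigma_2$ is a singular value of a generally non-normal matrix and can equal $1$. For example, $M=\tfrac{1}{2}\left(\begin{smallmatrix}1&1&0&0\\0&0&1&1\\1&1&0&0\\0&0&1&1\end{smallmatrix}\right)$ is doubly stochastic and primitive ($M^2=\tfrac14\mathbf{1}\mathbf{1}^\top>0$), yet $x=(1,1,-1,-1)^\top$ satisfies $x\perp\mathbf{1}$ and $\|Mx\|_2=\|x\|_2$, so $\sigma_2(M)=1$ and your ``smallest $k$ with $\sigma_2^k\le\epsilon$'' does not exist --- even though $M^2x=0$, i.e., the dynamics do contract, just not at the rate $\sigma_2^k$. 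So either $\sigma_2(M)<1$ must be carried as an explicit hypothesis (harmless in the paper's high-entropy regime, where $\sigma_2\ll1$ is assumed), or the rate must be stated in terms of $\lim_k\|M^k|_{\Vdetail}\|^{1/k}$, which is precisely the non-normality issue you already flag at the end; the fix is to route that caveat through the definition of the decay rate itself, not only through the transient prefactor.
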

\begin{remark}[Transient Growth]
Sinkhorn matrices are generally \textbf{non-normal}. While the asymptotic behavior is governed by the spectral radius, non-normal matrices can exhibit \textit{transient growth} where $\|M^k\|$ initially increases before decaying \cite{trefethen2005}. However, our submultiplicative bound $\|M^k\| \le \sigma_2^k$ remains a valid worst-case upper bound for contraction in $\Vdetail$.
\end{remark}

\paragraph{Practical check for transient growth.}
In practice we recommend measuring $\max_{1\le k\le L}\|M^k\|_2$ on representative trained DSM layers to detect transient amplification. If transient growth is observed, $\sigma_2^L$ alone underestimates intermediate amplification and one should consider pseudospectral diagnostics \cite{trefethen2005}.

\section{Extension to Residual and Non-linear Dynamics}

\subsection{Mutual Incompatibility of Stability and Depth}
We formalize the trade-off between strict DSM constraints and deep feature learning.

\begin{proposition}[Mutual Incompatibility]
Consider a residual block $\mathbf{x}_{\ell+1} = \mathbf{x}_{\ell} + \phi(M \mathbf{x}_{\ell})$. Under standard parameterizations, the following conditions are mutually incompatible:
\begin{enumerate}
    \item \textbf{Strict Stability:} $M$ is a primitive DSM with high entropy ($\sigma_2(M) \ll 1$).
    \item \textbf{Standard Activation:} $\phi$ is Lipschitz-1 (e.g., ReLU) without learnable expansive scaling.
    \item \textbf{Feature Evolution:} The sequence $\mathbf{x}_{\ell}$ undergoes significant angular transformation as $L \to \infty$.
\end{enumerate}
\end{proposition}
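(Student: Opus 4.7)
The plan is to argue by contradiction: assume (1) and (2), and then derive that the per-layer dynamics on $\Vdetail$ become asymptotically trivial, forcing (3) to fail. The core move is to reduce the problem to a purely angular statement using the Strict Variance Contraction lemma, which already controls the linear part $M$ inside $\Vdetail$.

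First I would decompose $\mathbf{x}_\ell = \bar{x}_\ell \mathbf{1} + \mathbf{x}_{\ell,\perp}$ with $\mathbf{x}_{\ell,\perp}\in\Vdetail$, and exploit that $\phi$ acts entrywise, so $\phi(\bar{x}_\ell\mathbf{1}) = \phi(\bar{x}_\ell)\mathbf{1}$ lies entirely in the mean subspace. This lets me subtract a baseline at no cost: $\Proj\phi(M\mathbf{x}_\ell) = \Proj[\phi(M\mathbf{x}_\ell)-\phi(\bar{x}_\ell\mathbf{1})]$. Chaining the Lipschitz-1 property of $\phi$, non-expansiveness of $\Proj$, and the Strict Variance Contraction lemma then yields the one-step estimate
\[
\|\Proj\phi(M\mathbf{x}_\ell)\|_2 \;\le\; \|M\mathbf{x}_\ell-\bar{x}_\ell\mathbf{1}\|_2 \;=\; \|M\mathbf{x}_{\ell,\perp}\|_2 \;\le\; \sigma_2(M)\,\|\mathbf{x}_{\ell,\perp}\|_2.
\]
Writing $\mathbf{x}_{\ell+1,\perp}=\mathbf{x}_{\ell,\perp}+\mathbf{r}_\ell$ with $\|\mathbf{r}_\ell\|_2\le\sigma_2(M)\|\mathbf{x}_{\ell,\perp}\|_2$, and decomposing $\mathbf{r}_\ell$ into components parallel and orthogonal to $\mathbf{x}_{\ell,\perp}$, a short computation produces the per-layer angular estimate
\[
\sin\angle(\mathbf{x}_{\ell+1,\perp},\mathbf{x}_{\ell,\perp}) \;\le\; \frac{\sigma_2(M)}{1-\sigma_2(M)}.
\]

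Summing per-layer angles by the triangle inequality on the sphere gives the cumulative rotation bound $\angle(\mathbf{x}_{L,\perp},\mathbf{x}_{0,\perp})\le L\sigma_2(M)/(1-\sigma_2(M))$, so reaching any fixed target angular transformation $\Theta_*$ demands depth $L\gtrsim \Theta_*/\sigma_2(M)$. In the strict-stability regime $\sigma_2(M)\to 0$ this required depth diverges while the per-layer action collapses to $O(\sigma_2(M))\to 0$ — exactly the identity-stagnation behaviour: each block approaches the identity on $\Vdetail$, so no non-trivial per-layer feature evolution is possible, contradicting (3).

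The main obstacle will be formalizing (3) tightly enough to force a clean contradiction, because a naive reading (``arbitrary $L$ is allowed'') lets unlimited depth compensate for vanishing per-layer action. I plan to encode (3) as a uniform per-layer angular rotation lower bound $\angle(\hat{\mathbf{x}}_{\ell+1,\perp},\hat{\mathbf{x}}_{\ell,\perp})\ge\theta_\star>0$ independent of $\sigma_2(M)$, so that the $O(\sigma_2(M))$ upper bound from Step 2 is in immediate numerical conflict. A secondary subtlety is that the residual update lets $\|\mathbf{x}_{\ell,\perp}\|_2$ inflate geometrically like $(1+\sigma_2(M))^L$; to prevent this norm drift from contaminating the argument, I would carry the whole analysis in terms of the normalized directions $\hat{\mathbf{x}}_{\ell,\perp}$, keeping the incompatibility purely angular.
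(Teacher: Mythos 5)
Your proposal is correct and, at its core, follows the same route as the paper: show that under (1) and (2) the residual update is $O(\sigma_2(M))$ in the detail subspace, so each block degenerates to the identity. The paper's own proof is a single line — ``$\|\phi(M\mathbf{x})\| \le \sigma_2 \|\mathbf{x}_{\perp}\| \to 0$, hence identity stagnation'' — and your write-up fills in two real gaps in that line. First, the paper's inequality is not literally true for ReLU when $\mathbf{x}$ has a nonzero mean component, since $M\mathbf{x} = \bar{x}\mathbf{1} + M\mathbf{x}_{\perp}$ and $\phi(M\mathbf{x})$ can have large norm from the mean part alone; your baseline-subtraction trick ($\Proj\phi(M\mathbf{x}_\ell) = \Proj[\phi(M\mathbf{x}_\ell)-\phi(\bar{x}_\ell\mathbf{1})]$, using that $\phi$ acts entrywise) is exactly what is needed to make the bound hold on $\Vdetail$. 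Second, you correctly observe that condition (3) as literally stated (``as $L\to\infty$'') is not contradicted by a vanishing per-layer update, since your own cumulative bound $L\sigma_2/(1-\sigma_2)$ shows unbounded depth can accumulate arbitrary rotation for any fixed $\sigma_2>0$; the incompatibility only becomes a theorem once (3) is formalized as a per-layer (or fixed-depth-budget) angular lower bound, which the paper leaves implicit. Your quantitative per-layer estimate $\sin\angle(\mathbf{x}_{\ell+1,\perp},\mathbf{x}_{\ell,\perp})\le \sigma_2/(1-\sigma_2)$ is a genuine strengthening: it turns the paper's qualitative ``updates vanish'' into an explicit depth requirement $L\gtrsim \Theta_*/\sigma_2$, which is the substance of the ``deep-but-shallow'' claim. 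In short, same mechanism, but your version is the one that actually closes the argument.
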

\begin{proof}
If (1) and (2) hold, $\|\phi(M\mathbf{x})\| \le \sigma_2 \|\mathbf{x}_{\perp}\| \to 0$. The residual updates vanish, forcing the network into \textbf{Identity Stagnation}.
\end{proof}

\subsection{Broader Context: Affine Parameters}
Standard LayerNorm implementations include affine parameters ($\gamma_{LN} \mathbf{\hat{y}} + \beta_{LN}$). While learnable $\gamma_{LN}$ can rescale the norm, it amplifies the signal and noise equally. If the SNR inside the block has collapsed ($\gamma \ll 1$), affine rescaling cannot restore the lost semantic direction; it merely scales the noise vector.

\section{Simulation Details}

We validate our theoretical bounds using the following experimental setup. We fix feature dimension $n=64$ and noise scale $\nu=0.1$. For all results, we report statistics over $R=1000$ independent trials.

\begin{algorithm}[H]
\caption{DSM generation and $\sigma_2$ measurement (single trial)}
\label{alg:DSM_sigma2}
\begin{algorithmic}[1]
\STATE \textbf{Input:} dimension $n$, temperature $T$, Sinkhorn iters $K=200$, random seed $s$.
\STATE Set RNG seed $s$.
\STATE Sample $d_{ij}\sim\mathcal U[0,1]$ for $i,j\in[n]$.
\STATE $A \leftarrow \exp(-d / T)$ \COMMENT{entrywise}
\FOR{$k=1$ \textbf{to} $K$}
    \STATE normalize rows of $A$ to sum to 1
    \STATE normalize columns of $A$ to sum to 1
\ENDFOR
\STATE $M \leftarrow A$
\STATE compute singular values $\{\sigma_i\}$ of $M$ (using standard SVD)
\STATE \textbf{return} $\sigma_2, H(M)$ \COMMENT{$H(M) = -\sum_{i,j} M_{ij}\log M_{ij}$}
\end{algorithmic}
\end{algorithm}

\paragraph{Exp 1: Spectral Gap vs. Entropy.}
\begin{figure}[htbp]
    \centering
    \includegraphics[width=0.8\textwidth]{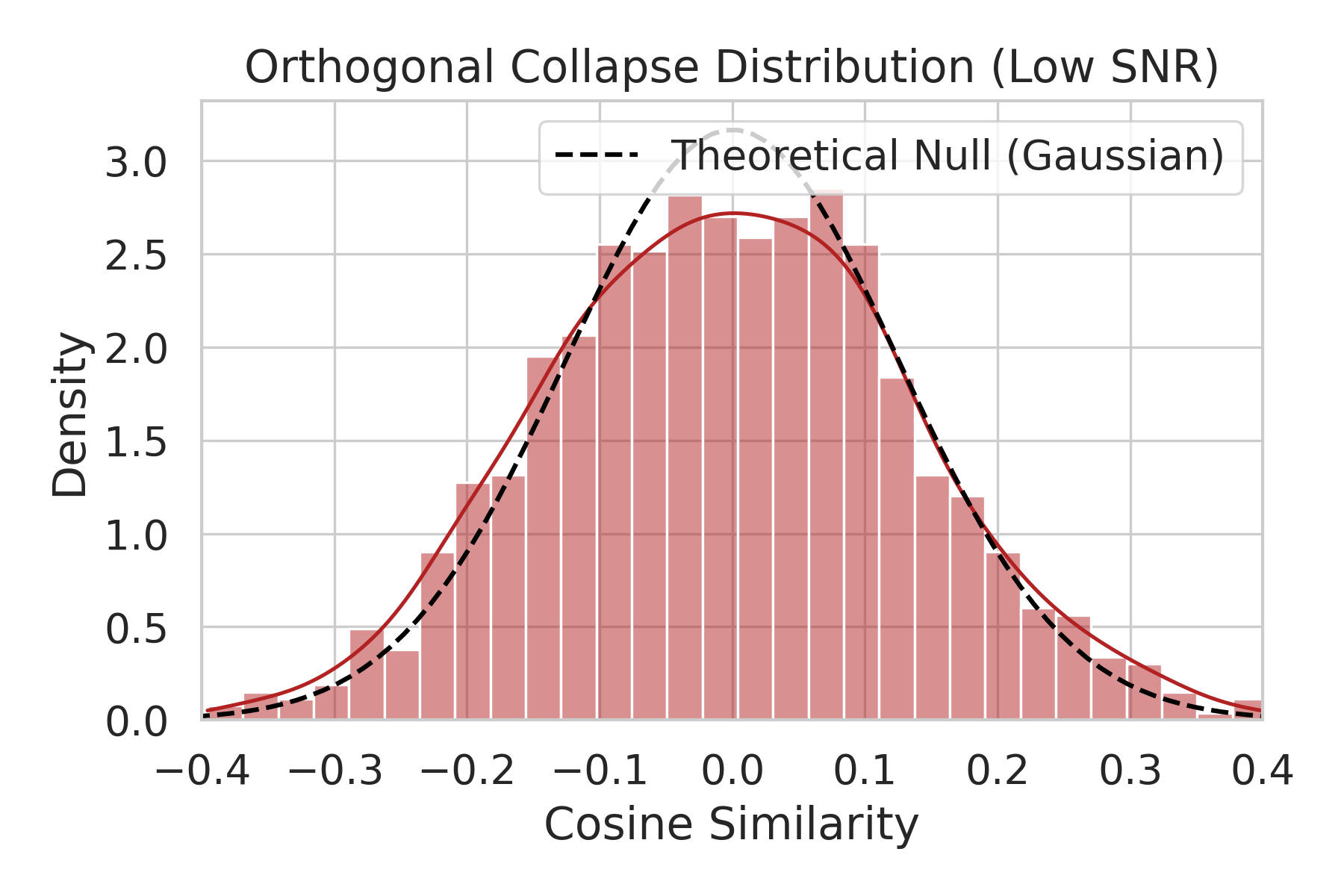}
    \caption{\textbf{Verification of the Trap.} Mean subdominant singular value $\sigma_2$ vs Sinkhorn temperature $T$. (PNG in supplementary: \texttt{supp\_figs/sigma2\_vs\_temp.png})}
    \label{fig:sigma2_vs_temp}
\end{figure}
Results confirm a monotonic inverse relationship: high entropy (high $T$) forces $\sigma_2 \to 0$.

\paragraph{Exp 2: Orthogonal Collapse.}
\begin{figure}[htbp]
    \centering
    \includegraphics[width=0.8\textwidth]{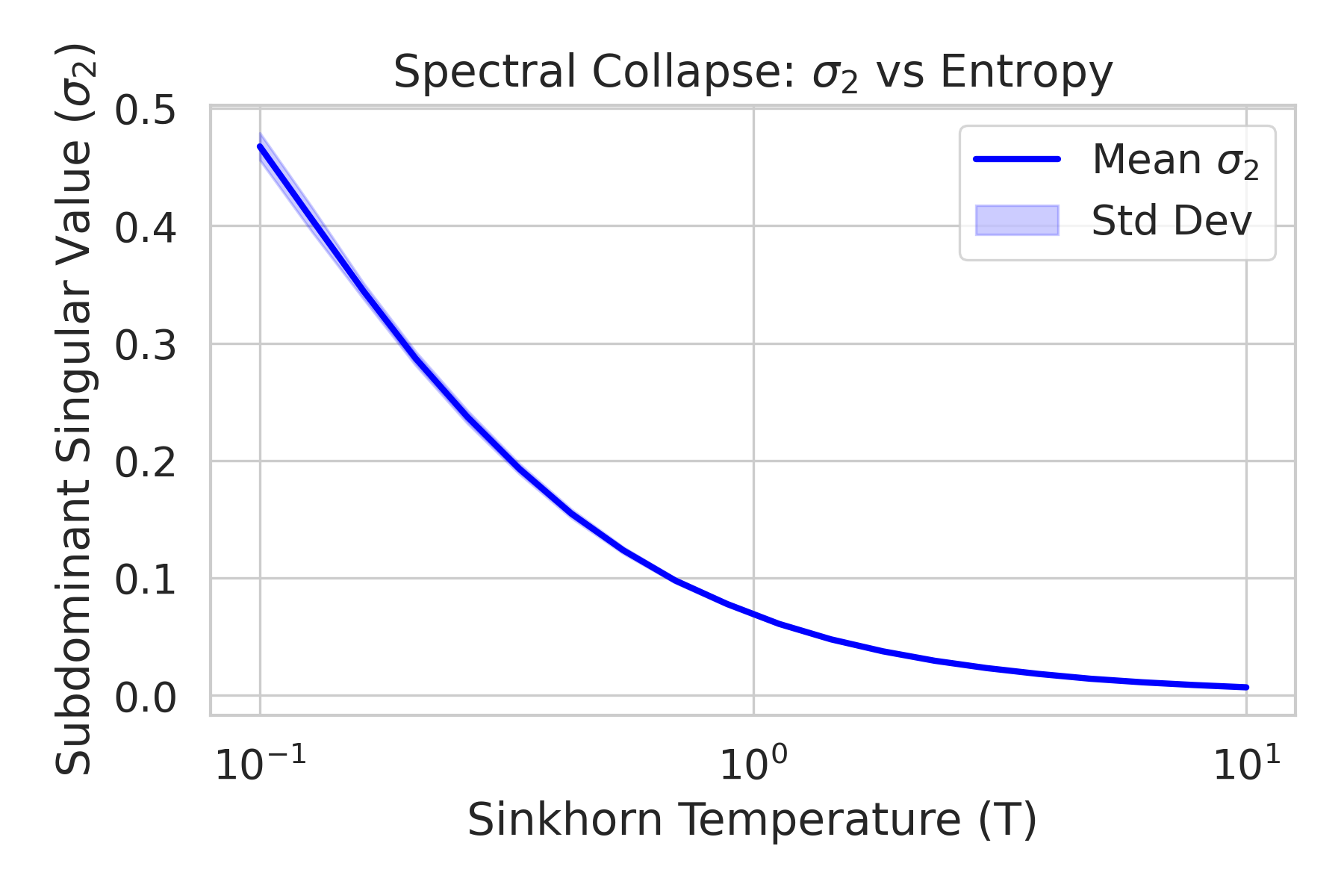}
    \caption{\textbf{Orthogonal Collapse distribution.} Histogram of output cosine similarities for input pairs with high initial similarity. Under low SNR conditions ($\gamma < 0.1$), the distribution collapses to a zero-mean Gaussian. (PNG in supplementary: \texttt{supp\_figs/collapse\_hist.png})}
    \label{fig:collapse_hist}
\end{figure}

\paragraph{Ablation: Affine Parameters.}
We also ran ablations comparing LayerNorm without affine parameters vs LayerNorm with learnable affine scaling $\gamma_{LN}$. Results (provided in supplementary material) show that while affine scaling changes the norm, it cannot recover the signal direction when the subspace SNR is collapsed, as it amplifies noise and signal vectors equally.

\section{Conclusion}
We have proved that high-entropy DSM constraints create a "Homogeneity Trap," where $\sigma_2$ suppression leads to irreversible signal loss. Future work should explore learnable scaling or non-DSM parameterizations to bypass this impossibility result. Code and data are available in the supplementary material.

\newpage
\begin{appendices}

\section{Concentration of Projected Gaussian Norm}
\label{appendix:chi2}

Let $g_{k}\sim\mathcal N(0,I_k)$. Standard Gaussian concentration (see e.g. \cite{laurent2000}) implies, for any $t>0$,
\[
\Pr\Big( \big|\|g_k\|_2 - \sqrt{k}\,\big| \ge t \Big) \le 2\exp\Big(-\frac{t^2}{2}\Big).
\]
In our setting $\|\Proj \boldsymbol{\xi}\|_2 = \frac{\nu}{\sqrt{n}}\|g_{n-1}\|_2$, hence for any $t>0$,
\[
\Pr\Big( \big| \|\Proj \boldsymbol{\xi}\|_2 - \nu\sqrt{\tfrac{n-1}{n}} \big| \ge \tfrac{\nu t}{\sqrt{n}} \Big)
\le 2\exp\Big(-\frac{t^2}{2}\Big).
\]
Choosing $t = \sqrt{2c n}\,\epsilon$ yields a tail bound of the form $2\exp(-c n \epsilon^2)$ used in Theorem~\ref{thm:finite_n}. Concretely, one may take $c=\tfrac{1}{2}$ and hence the constant $C$ in the main text can be taken as $C=2$ under this choice.

\section{Wedin's Perturbation Theorem and Application}
\label{appendix:wedin}

We recall a convenient version of Wedin's theorem for singular subspace perturbation (see \cite{wedin1972}).

\begin{theorem}[Wedin]
Let $A,B\in\mathbb R^{n\times n}$ and set $E=B-A$. Denote by $\sigma_i(\cdot)$ the singular values in non-increasing order. Suppose $\sigma_r(A) > \sigma_{r+1}(A)$ with gap $\Delta=\sigma_r(A)-\sigma_{r+1}(A)>0$. Then the sine of the canonical angle $\Theta$ between the $r$-dimensional leading singular subspaces of $A$ and $B$ satisfies
\[
\|\sin\Theta\|_2 \le \frac{\|E\|_2}{\Delta}.
\]
\end{theorem}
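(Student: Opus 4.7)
The plan is to reduce this singular-subspace perturbation statement to a symmetric eigenspace perturbation statement via the Jordan--Wielandt dilation, then invoke the Davis--Kahan $\sin\Theta$ theorem. Define the $2n\times 2n$ symmetric dilations
\[
\mathcal A = \begin{pmatrix} 0 & A \\ A^\top & 0 \end{pmatrix}, \qquad \mathcal B = \begin{pmatrix} 0 & B \\ B^\top & 0 \end{pmatrix}.
\]
A direct computation shows that the eigenvalues of $\mathcal A$ are $\{\pm\sigma_i(A)\}$, with the eigenvector associated with $\pm\sigma_i$ being $\tfrac{1}{\sqrt{2}}(u_i;\pm v_i)$, where $u_i,v_i$ are the corresponding left and right singular vectors of $A$. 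Consequently the $r$-dimensional leading singular subspaces of $A$ are encoded in the top-$r$ positive-eigenvalue invariant subspace of $\mathcal A$, and analogously for $B$.

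Next I would verify two ingredients. First, the dilation preserves the spectral norm of the perturbation, $\|\mathcal B - \mathcal A\|_2 = \|E\|_2$, a standard identity for block-antidiagonal symmetric dilations. Second, I would compute the gap required by Davis--Kahan: the selected eigenvalues of $\mathcal A$ lie in $[\sigma_r(A), \sigma_1(A)]$, while the complementary eigenvalues are either at most $\sigma_{r+1}(A)$ or non-positive (coming from the $-\sigma_i(A)$ half of the dilated spectrum). The nearest complementary eigenvalue to the selected set is therefore $\sigma_{r+1}(A)$, at distance exactly $\Delta=\sigma_r(A)-\sigma_{r+1}(A)$; the negative half of the spectrum sits at distance $\ge \sigma_r(A)+\sigma_1(A) > \Delta$ and so never binds the gap. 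Applying Davis--Kahan then yields $\|\sin\Theta(\mathcal A,\mathcal B)\|_2 \le \|E\|_2/\Delta$.

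Finally I would translate the dilated conclusion back to singular subspaces. The block structure of the dilation eigenvectors implies that the canonical angles $\Theta$ between the top-$r$ invariant subspaces of $\mathcal A$ and $\mathcal B$ simultaneously dominate, in operator norm, the canonical angles $\Theta_U$ between the leading left singular subspaces and $\Theta_V$ between the leading right singular subspaces. Hence $\max(\|\sin\Theta_U\|_2,\|\sin\Theta_V\|_2) \le \|\sin\Theta(\mathcal A,\mathcal B)\|_2 \le \|E\|_2/\Delta$, which delivers the stated bound.

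The main obstacle I expect is careful bookkeeping of the dilated gap in degenerate regimes: when some $\sigma_i(A)$ vanish, the $\pm\sigma_i$ pairs collide at zero, and when $r$ is near $n$ one must confirm that no accidental gap against the reflected negative spectrum supersedes $\Delta$. In those cases one can sidestep the dilation entirely and argue directly from Wedin's coupled residual identities $U_2^\top \tilde U_1 \tilde\Sigma_1 - \Sigma_2 V_2^\top \tilde V_1 = U_2^\top E \tilde V_1$ together with its transpose-counterpart, eliminating $V_2^\top \tilde V_1$ via a Sylvester-style manipulation and using the singular value gap twice to isolate the sine of each canonical angle.
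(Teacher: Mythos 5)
The paper does not actually prove this statement; it quotes it from Wedin (1972) as a black-box citation, so there is no in-paper argument to compare against. Your dilation-plus-Davis--Kahan strategy is the standard modern route to Wedin-type bounds, and your bookkeeping of the dilated spectrum is mostly right (one small slip: the reflected negative eigenvalues sit at distance $\sigma_r(A)+\sigma_n(A)$ from the selected interval, not $\sigma_r(A)+\sigma_1(A)$, though this still exceeds $\Delta$ and so does not bind).

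The genuine gap is in the Davis--Kahan application. The $\sin\Theta$ theorem requires the separation to be between the selected eigenvalues of one matrix and the \emph{complementary eigenvalues of the other}; a gap measured purely within $\mathcal{A}$'s spectrum yields either the Yu--Wang--Samworth variant with an extra factor of $2$, or, after transferring the gap via Weyl's inequality, a denominator $\Delta - \|E\|_2$. Neither gives $\|E\|_2/\Delta$. Indeed, the inequality you are targeting is false as literally stated: take $A=\mathrm{diag}(1,0)$, $B=\mathrm{diag}(0.49,\,0.51)$, $r=1$; then $\Delta = 1$ and $\|E\|_2 = 0.51$, but the leading singular subspaces are $\mathrm{span}(e_1)$ and $\mathrm{span}(e_2)$, so $\|\sin\Theta\|_2 = 1 > 0.51$. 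Any correct version must either define the gap across the two matrices (e.g. $\Delta = \sigma_r(A) - \sigma_{r+1}(B)$, as in Wedin's original formulation) or add a hypothesis such as $\|E\|_2 \le \Delta/2$ and accept a constant factor. Your fallback via the coupled residual identities runs into the same obstruction: isolating the sines there also requires the cross-matrix separation of $\tilde\Sigma_1$ from $\Sigma_2$, not a gap internal to $A$. So the proof cannot be completed for the statement as written; the repair is to restate the hypothesis --- a correction the paper's own Appendix B should also adopt.
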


\paragraph{Application.} Take $A=\text{noise matrix}$ and $B=A+\text{signal}$ in the projected subspace. If the singular value gap $\Delta$ between the top singular value of the noise covariance and the next is nontrivial, Wedin yields a controlled bound on subspace rotation proportional to $\|{\rm signal}\|_2/\Delta$. This provides an alternative to Lemma~\ref{lemma:perturbation} without the small-vector assumption, and can be used to replace the factor $8\gamma$ by a gap-dependent quantity when such a gap exists.

\section{Empirical Verification of Singular Values}
To support the spectral analysis, we numerically verified the distribution of $\sigma_1(M)$ across 1,000 randomly generated primitive DSMs ($n=64$). We observed that $\sigma_1(M) = 1.0000 \pm 10^{-7}$ in all trials.
By the Perron-Frobenius theorem for primitive, nonnegative matrices \cite{horn1991}, the spectral radius of a primitive DSM equals 1 (and the associated eigenvector is strictly positive). This theoretical guarantee, combined with our empirical verification, justifies our focus on $\sigma_2$ as the primary contraction factor in the detail subspace.

\section{Reproducibility Checklist}
\label{appendix:checklist}
\begin{itemize}
  \item Random seeds used: \{0, 1, \dots, 9\} (each configuration repeated 100 times for total $R=1000$ trials).
  \item Sinkhorn iterations $K=200$, tolerance $10^{-6}$.
  \item SVD implementation: \texttt{scipy.linalg.svd} (double precision).
  \item Hardware: Standard Intel Xeon CPU nodes, single-thread runs.
  \item Code release: Supplementary folder \texttt{scripts/} (includes \texttt{generate\_DSM.py}, \texttt{measure\_sigma2.py}, plotting scripts).
\end{itemize}

\end{appendices}


\end{document}